\newcolumntype{K}[1]{>{\centering\arraybackslash}p{#1}}
\newtheorem{theorem}{Theorem}
\newtheorem{corollary}{Corollary}
\newcommand{\Pt}{P_\text{trg}}
\newcommand{\Ps}{P_\text{src}}
\def\argmin{\operatornamewithlimits{argmin}}
\title{Kernel Robust Bias-Aware Prediction under Covariate Shift}
\author{Anqi Liu, Rizal Fathony, Brian D. Ziebart \\ University of Illinois at Chicago}
\begin{document}


%
%
%
%
%
%
%
%
%

\maketitle

\begin{abstract}
Under covariate shift, training (source) data 
and testing (target) data 
differ in input space distribution, but share the same conditional label 
distribution. This poses a challenging machine learning task.
Robust Bias-Aware 
(RBA)
prediction provides the conditional label distribution
that is robust to the worst-case logarithmic loss for the target distribution
while matching feature expectation constraints from the source distribution.
However,  
employing RBA with 
insufficient feature constraints may result in high certainty predictions for much of the source
data, while leaving too much uncertainty for target data predictions. 
To overcome this issue,
we extend the representer theorem to the RBA setting, enabling
minimization of regularized
expected target risk by a reweighted kernel expectation under the source distribution. By applying kernel methods, we establish 
consistency guarantees and demonstrate better performance of the RBA 
classifier than competing methods on 
synthetically biased UCI datasets as well as datasets that have natural covariate shift. 
\end{abstract}

\section{Introduction}
In 
standard 
supervised 
machine learning, data used to evaluate the generalization error of a classifier is assumed to be 
independently 
drawn from the same distribution that 
generates 
training samples. 
This assumption of independent and identically distributed (IID) data 
is partially violated in the covariate shift setting  
\cite{shimodaira2000improving}, where the 
conditional label distribution $P(y|x)$ 
is shared by source and target data, but 
the distribution on input variables $P(x)$
differs 
between source and target samples, i.e., $\Ps(x)$ differs from 
$\Pt(x)$. 
All models trained under IID assumptions can 
suffer from covariate shift 
and provide overly optimistic extrapolation when generalizing to new data \cite{fan2005improved}. 
An intuitive and traditional method to 
address 
covariate shift is by importance weighting  \cite{shimodaira2000improving,zadrozny2004learning}, which
tries to de-bias the objective loss function by weighting each instance with the ratio ${\Pt(x)}/{\Ps(x)}$. However, importance weighting not only results in 
high variance predictions, but 
also 
only provides generalization performance guarantees when  
strong conditions 
are met by the source and target data distributions 
\cite{cortes2010learning}. 


The 
recently developed 
robust bias aware 
(RBA) approach to covariate shift 
 \cite{liu2014robust} is 
based on a minimax robust estimation formulation \cite{grunwald2004game} that 
assumes 
the worst case 
conditional label 
distribution and 
requires 
only source feature
expectation matching as constraints. 
The approach provides 
conservative 
target predictions 
when 
the 
target 
distribution 
does not have sufficient statistical 
support from the source 
data. 
This statistical support is defined by the choice of source statistics or 
features. 
The classifier tries to make the prediction certainty 
under 
the target distribution as small as possible,
but 
feature matching constraints prevent it from doing
so fully. 
As a result, 
less 
restrictive 
feature constraints
produce 
less certain predictions 
on target data from 
the 
resulting 
classifier. 
As shown in Figure~\ref{syn_cmp}(a), 
with limited features, the classifier may 
allocate 
most of the certainty 
under portions of 
the source distribution (solid line) 
where the target distribution (dashed line) density is small 
to satisfy the source feature expectation matching constraints, 
leaving too much uncertainty in portions of
the 
target distribution.  
On the other hand, when there are more 
restrictive features constraining the conditional label distribution,
the classifier 
produces a 
better model 
of 
the data and 
gives 
more informative predictions with less target entropy and logloss, as in Figure~\ref{syn_cmp}(b). 
This relation inspires our 
contribution: 
leveraging 
kernel methods 
to provide 
higher dimensional features to the RBA classifier
without introducing a proportionate computational burden.

According to the representer theorem \cite{Kimeldorf197182}, the minimizer of regularized empirical loss in 
reproducing kernel 
Hilbert space can be represented by 
a 
linear combination of kernel products evaluated on training data. 
Model 
parameters
are then 
obtained by estimating the coefficients 
of this 
linear combination. However, in the robust bias-aware classification framework, the objective function of the dual 
problem is the regularized expected 
logarithmic loss
under the target data distribution.  
It 
cannot 
be computed explicitly using data because 
 labeled 
target 
samples are unavailable. 
Meanwhile, the distribution discrepancy 
when 
evaluating 
the 
risk function and sampling training data prevents us from applying the representer theorem directly. 
{
\begin{figure}[htp]
\begin{tabular}{cc}

(a) First moment features&\, 
(b)  Third moment features\\
\includegraphics[height=3.3cm]{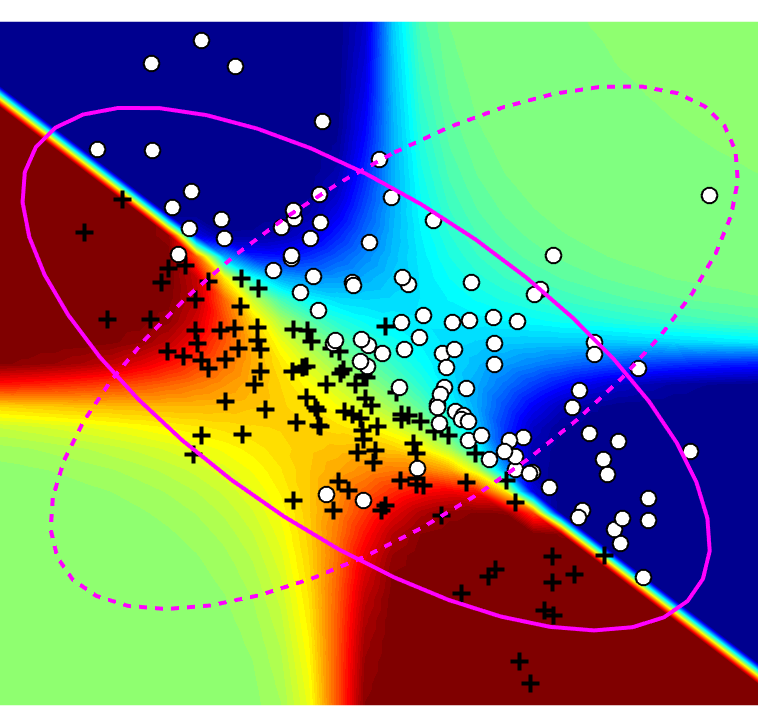} &
 \includegraphics[height=3.3cm]{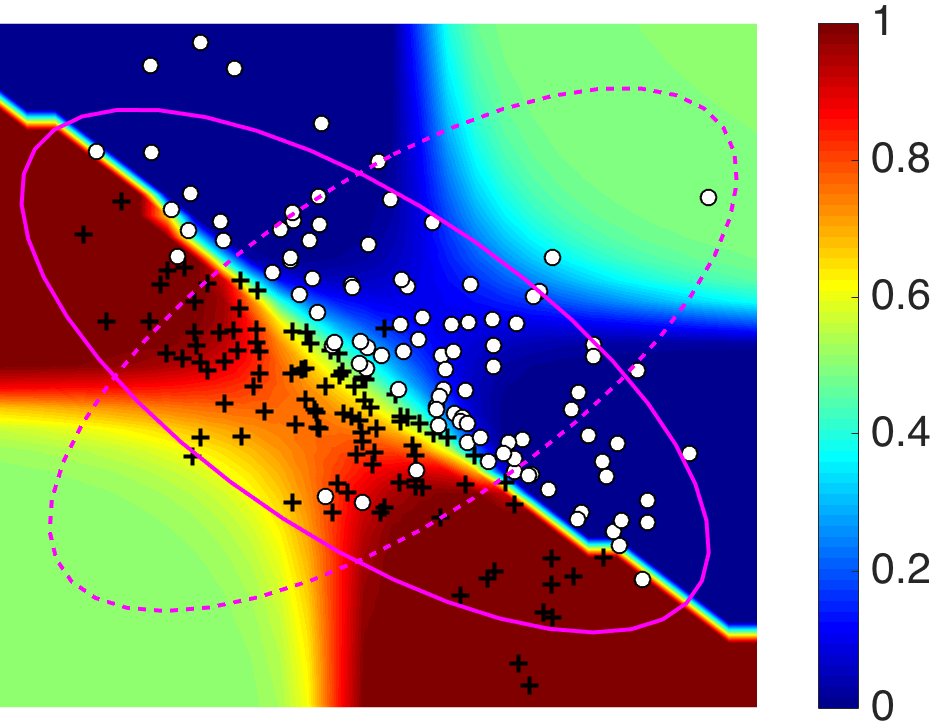}  \\
logloss: 0.74&logloss: 0.53\\
 entropy: 0.93  &  entropy: 0.73
\end{tabular}
\caption{Performance comparison with the robust bias aware classifier using 
first-order features (a) and first-order through third-order features 
(b).  
Labeled source data samples (`o' and `+' classes), source (solid line) and target (dashed line)        
distributions that data are drawn from are shown.  
The colormap represents the predicted probability, $P(y =$`+'$|x)$. The intersection of 
the source distribution and the target distribution is better predicted with 
third-order features and is much more uncertain when only using 
first moment features. The corresponding target logloss and entropy are shown. }
\label{syn_cmp}
\end{figure}
}

A quantitative form of the representer theorem 
has been proposed 
that holds for the continuous case  \cite{de2004some}
in which a minimizer over a distribution---rather than discrete samples---is 
sought. 
The minimizer of regularized expected risk is represented as the expectation under the same probability distribution instead of a linear combination of the 
training 
data. We 
utilize this result 
to extend the representer theorem 
for RBA prediction 
in the covariate shift setting. 
We show that 
the minimizer of 
the 
regularized
expected target risk 
can be represented 
as a reweighted kernel expectation under the source distribution.
This enables us to apply kernel methods to the robust bias aware classifier.

In this paper, we explore the theoretical foundation of kernel methods for 
robust covariate shift prediction. 
We investigate the underlying effect brought by kernelization and establish consistency properties that are realized by applying 
kernel methods to RBA prediction.  
We then demonstrate 
the empirical advantages of the kernel robust bias aware classifier on 
synthetically biased benchmark datasets as well as 
datasets that have natural covariate shift bias.  

\section{Related Work}
%
To address the shifts between training and testing input distributions, which is known as covariate shift, 
existing methods often try to reweight the source samples,
denoted $\tilde{P}_{\text{src}}(x)$, 
to 
make them 
more representative of 
target distribution
samples. 
The theoretical argument 
supporting 
this approach \cite{shimodaira2000improving} 
is that reweighting 
is asymptotically optimal for minimizing target distribution
log loss (equivalently, maximizing target distribution
log-likelihood): 
\begin{align}
&\mathbb{E}_{P_{\text{trg}}(x) P(y|x)}[
\text{loss}(\hat{P}(Y|X), Y)] = \notag \\
&\lim_{n \rightarrow \infty}
\mathbb{E}_{\tilde{P}_{\text{src}}(x) \tilde{P}(y|x)}\left[
\frac{\Pt(X)}{P_{\text{src}}(X)} 
\text{loss}(\hat{P}(Y|X), Y)
\right], \label{eq:reweight} 
\end{align}
where we use $\hat{P}(y|x)$ to represent the estimated predictor and $\tilde{P}(y|x)$ is the empirical distribution. 

Most existing covariate shift research follows this idea of seeking an 
unbiased estimator of target risk
 \cite{sugiyama2005model}.  
Significant attention has been paid to 
estimating the density ratio ${\Pt(x)}/{\Ps(x)}$, 
which strongly impacts predictive performance  \cite{Bickel2009,dudik2005correcting}.  
Direct estimation methods 
estimate the 
density 
ratio by minimizing information theoretical criterion like KL-divergence \cite{sugiyama2008direct,kanamori2009efficient,yamada2011relative} or 
matching kernel means  \cite{huang2006correcting,yu2012analysis}
rather than estimating the ratio's numerator and denominator densities  
separately. 
Other methods \cite{wen2014robust} consider the ratio as an inner parameter within the model and relate
the ratio with model misspecification.
There are also methods for specific models of the covariate shift mechanism \cite{zadrozny2004learning,sugiyama2007covariate}. 
Additional methods have also been recently proposed to 
address some of the limitations of 
importance weighting \cite{reddi2015doubly}.

Theoretical analyses have uncovered the brittleness of importance  
weighting for covariate shift by  
analyzing 
its statistical learning
bounds \cite{ben2007analysis,cortes2008sample}.  
Cortes et al. (\citeyear{cortes2010learning}) establish 
generalization bounds for learning importance 
weighting under covariate shift  
that only hold 
when the second moment of sampled importance weights is bounded, 
$\mathbb{E}_{\Ps(x)}[(\Pt(X)/\Ps(X))^2] < \infty$. 
When not bounded, 
a small number of data points 
with large importance weights
can dominate the reweighted loss, resulting in high variance predictions.

Kernel 
methods 
have been 
employed for estimating the density ratio in 
importance weighting methods;
for example, 
kernel mean matching \cite{huang2006correcting,yu2012analysis}. 
uses the 
core 
idea that the kernel mean in a reproducing
kernel Hilbert space (RKHS) of the source data should be close to that of the reweighted target data and the optimal density ratio is obtained by minimizing 
this 
difference.
Kernel methods have 
also served 
as a bridge between the source and the target domains in broader transfer 
learning or domain adaptation problems.
In these approaches, kernel methods are used to
project source data and target data into a latent space where the distance 
between the two distributions is small or can be minimized \cite{pan2010survey}.

These existing applications of kernel methods for covariate shift 
are orthogonal to our approach because 
they are based on 
empirical risk minimization formulations with the assumption
that source data could somehow be transformed to match target distributions. This differs substantially from our robust approach. 

\section{Approach}

\subsection{Robust bias-aware classifier}
The robust bias-aware classification model is 
based on a minimax robust estimation framework (\ref{eq:game}). 
Under this framework, an 
estimator player $\hat{P}(Y|X)$
first 
chooses 
a conditional label distribution to minimize the logloss and then 
an 
adversarial player $\check{P}(Y|X)$
chooses a 
label distribution from the set ($\Xi$) of statistic-matching 
conditional probability to maximize the logloss  \cite{liu2014robust}: 

{\small
\begin{align}
\min_{\hat{P}(Y|X) } \;
\max_{\check{P}(Y|X) \in \Xi} 
\; \text{loss}_{\Pt(X)}\left(\check{P}(Y|X), \hat{P}(Y|X)\right).
\label{eq:game}
\end{align} 
}%
Under IID settings, it is known that robust loss minimization  
is equivalent and dual to empirical risk minimization 
\cite{grunwald2004game}.  
From this perspective, RBA prediction modifies the dual robust loss 
minimization problem in contrast to existing importance weighting methods, 
which modify the primal empirical risk minimization problem. 
After distinguishing between source and target distributions and using the
logarithmic loss, the robust optimization problem
reduces 
to a maximum entropy problem: 
{\small
\begin{align}
&\!\!\!\max_{\hat{P}(Y|X)}\! 
H_{\Pt(x)} 
(Y|X) 
\triangleq \mathbb{E}_{\Pt(x)\hat{P}(y|x)}[-\!\log \hat{P}(Y|X)]
\label{eq:maxent} 
\\ \notag
&\text{ such that: }\! \hat{P}(Y|X)\! \in\! \Delta
\text{ and } 
\mathbb{E}_{\Ps(x) \hat{P}(y|x)}[\Phi(X,Y)] = {\bf c}, 
\end{align}
}%
where 
$\Delta$ defines the conditional probability simplex that 
$\hat{P}(y|x)$ 
must reside within, 
$\Phi$ is 
a 
vector-valued 
feature function 
that 
is 
evaluated on input $x$, and ${\bf c} = \mathbb{E}_{\Ps(x,y)}[\Phi(X,Y)]$ 
is a vector of 
the expected feature values that corresponds with the feature function, 
which is approximated using source sample data in practice.
Solving 
for the parametric form of 
$\hat{P}(y|x)$ from this optimization problem 
yields: 
\begin{align}
\hat{P}(y|x)=
{e^{\frac{\Ps(x)}{\Pt(x)} \theta \cdot \Phi(x,y)}}\Big/{Z(x)} , \label{eq:dist}
\end{align}
with a normalization term defined as $Z(x)=\sum_{y' \in \mathcal{Y}} 
\exp\{\frac{\Ps(x)}{\Pt(x)} \theta\cdot \Phi(x,y')\}$. Minimizing the target logarithmic loss,
\begin{align}
\theta=
\argmin_{\theta}\mathbb{E}_{\Pt(x)P(y|x)}[-\log \hat{P}(Y|X)]+\lambda||\theta||^2_2,\label{eq:reg_loss}
\end{align}
provides parameter vector estimates $\theta$.
%
This can be accomplished by approximating the gradient using 
source samples 
rather than approximating the objective  function 
\eqref{eq:reg_loss}.  After plugging \eqref{eq:dist}
into \eqref{eq:reg_loss}, the gradient using source samples is: 
\begin{align}
\mathbb{E}_{\tilde{P}_{\text{src}}(x) \hat{P}(y|x)}[ \Phi(X,Y)]-\tilde{\mathbf{c}} +2\lambda\theta,\label{eq:gradient}
\end{align}
with $\tilde{\bf c} \triangleq \mathbb{E}_{\tilde{P}_{\text{src}}(x)
\tilde{P}(y|x)}[\Phi(X,Y)]$.
Therefore, the RBA approach directly minimizes the expected target logloss rather than approximating it 
using importance weighting 
with 
finite source samples. 

As illustrated by Figure \ref{syn_cmp}, the feature function 
of the RBA predictor $\Phi$ forms the constraints
that prevent high levels of uncertainty (entropy) in the 
target distribution.   
As a result, more extensive sets of feature constraints may be needed 
to appropriately constrain the RBA model to provide more certain predictions 
in portions of the input space where target data is more probable under source distribution, like the intersection
of source and target distribution in Figure \ref{syn_cmp}.  

\subsection{Extended representer theorem for RBA}
Kernel methods are motivated in the RBA approach to provide a more 
sufficiently restrictive set of constraints that forces generalization 
from source data samples to target data. 
However, 
the inability to directly apply empirical risk minimization in the        
RBA approach \eqref{eq:reg_loss} complicates their incorporation 
since kernel method applications often use empirical risk minimization as a 
starting point.  

We extend the representer theorem in 
the 
RBA approach by first
investigating 
the minimizer of the regularized expected target loss. 
Theorem ~\ref{thm:represent} shows that the minimizer of a regularized expected target loss can instead be
represented by a reweighted expectation under the source distribution. This paves the theoretical foundation of applying kernel methods to RBA, which essentially differs from traditional empirical risk minimization based methods and use expected target loss as a starting point.

{
\begin{figure*}[htp] 
\begin{center}
\begin{tabular}{cccc}
(a) Linear & (b) Gaussian & (c) Polynomial-2 & (d) Polynomial-3\\
\includegraphics[height=3.6cm]{figures/linear_crop.png} &
\includegraphics[height=3.6cm]{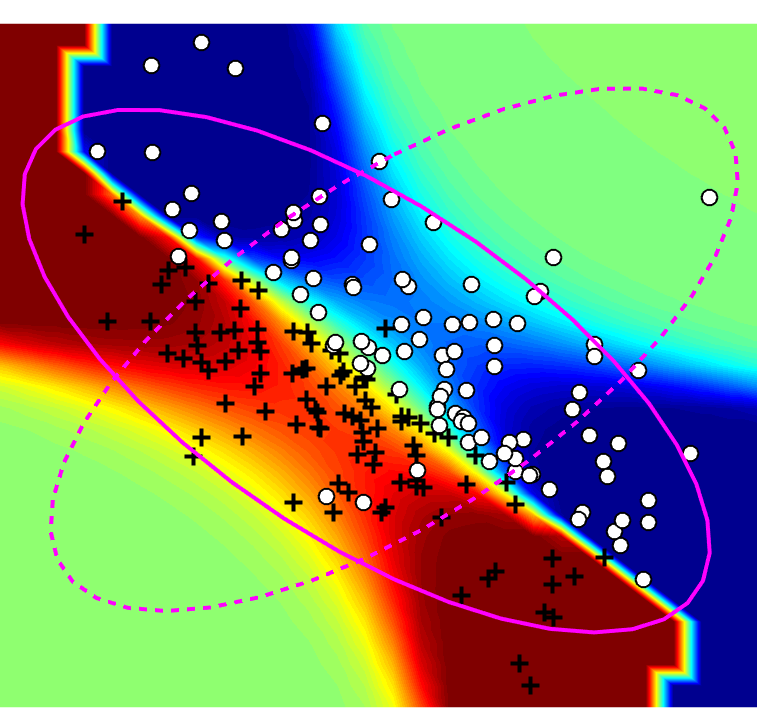} & 
\includegraphics[height=3.6cm]{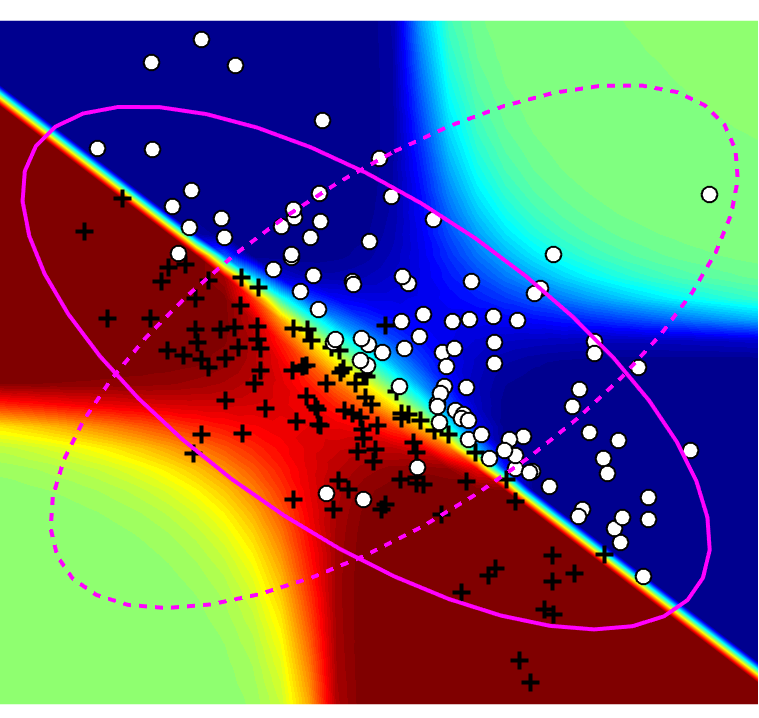}&
\includegraphics[height=3.6cm]{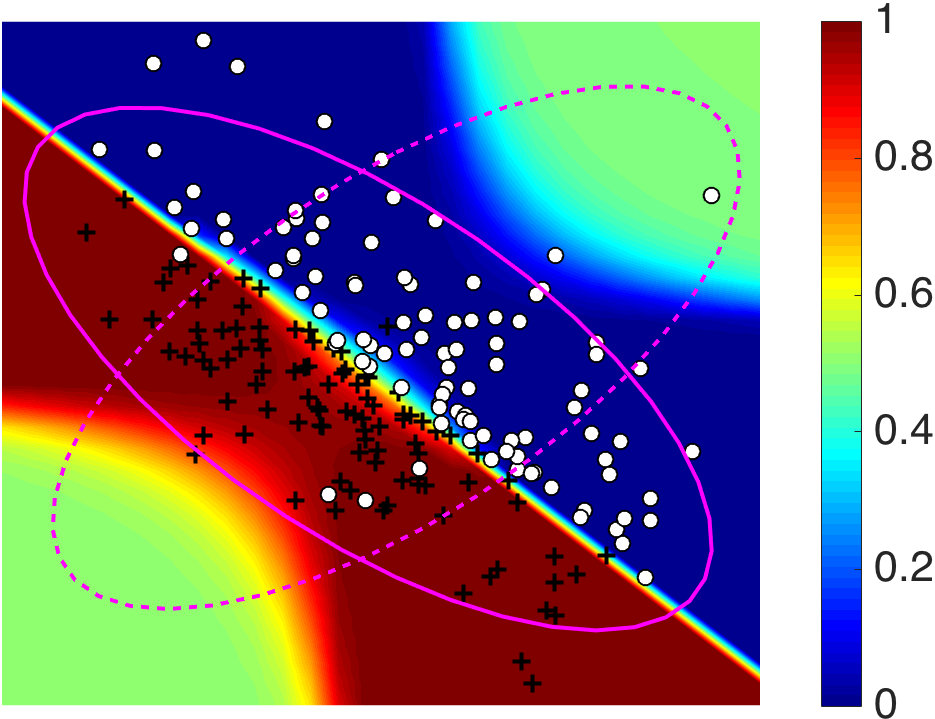} \\
logloss: 0.74&logloss: 0.65&logloss: 0.48  &logloss: 0.41\\
entropy: 0.93 &entropy: 0.86&entropy: 0.67&entropy: 0.45\\
\end{tabular}
\caption{Performance comparison with robust bias aware classifier using 
linear features (a), using Gaussian kernels with bandwidth 0.5 (b), using
polynomial kernels with order 2 (c) and using polynomial kernels
with order 3 (d).
Ellipses show the same source and target data distribution as in Figure~\ref{syn_cmp}.
The intersection of 
source distribution and target distribution is better predicted with 
kernel methods applied. The corresponding logloss and entropy evaluated on 
the target distribution shows that
more certain and informative predictions are produced by kernel RBA.
}
\label{fig:simulation}
\end{center}
\end{figure*}
}
\begin{theorem} \label{thm:represent}
Let $\mathcal{X}$ be the input space and $\mathcal{Y}$ be the output space, $K$ is a positive definite real valued
 kernel on $(\mathcal{X},\mathcal{Y}) \times (\mathcal{X}, \mathcal{Y})$ with corresponding reproducing kernel Hilbert space $H_k$, 
if the training samples $(x_{1}^s, y_{1}^s), \dotsc, (x_n^s, y_n^s) \in \mathcal{X} \times \mathcal{Y}$ are 
drawn from a source distribution $\Ps(x)P(y|x)$ and the testing samples
$(x_{1}^t, y_{1}^t), \dotsc, (x_{m}^t, y_{m}^t) \in \mathcal{X} \times \mathcal{Y}$ are drawn from a
 target distribution $\Pt(x)P(y|x)$, any minimizer $f^{*}$ of \eqref{eq:reg_loss} in $H_k $, defining the conditional label distribution,
\begin{align}
\hat{P}(y|x) = {e^{f^*(x,y)}}\Big/{\sum_{y'} e^{f^*(x,y')}}, 
\label{eq:kernel_dist}
\end{align} 
admits a representation with a form such that
each $f^*(x_{i}^t, y_{i}^t)=$ 
{\small
\begin{align}
\frac{\Ps(x_{i}^t)}{\Pt(x_{i}^t)}\mathbb{E}_{\Ps(x)P(y|x)}\left[\alpha(X,Y)K((x_{i}^t,y_{i}^t),(X,Y))\right],\label{eq:minimizer}
\end{align}}%
where $\alpha(x_i,y_i) \in \mathbb{R}$, for $1 \le i \le m$,
with
\begin{align}
\theta=\mathbb{E}_{\Ps(x)P(y|x)}\left[\alpha(X,Y)\Phi(X,Y)\right]. \label{eq:theta}
\end{align}
\end{theorem}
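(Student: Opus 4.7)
The plan is to extend the continuous-case representer theorem of de Vito et al. to the RBA objective by combining it with the specific exponential parameterization \eqref{eq:dist} and a change of measure from target to source. First I would kernelize: replace $\theta\cdot\Phi(x,y)$ by a function $g \in H_k$ evaluated at $(x,y)$, so that \eqref{eq:reg_loss} becomes the minimization of $\mathbb{E}_{\Pt(x)P(y|x)}[-\log \hat{P}(Y|X)] + \lambda\|g\|_{H_k}^2$ with $\hat{P}(y|x) \propto \exp\{(\Ps(x)/\Pt(x))\,g(x,y)\}$. Defining $f^*(x,y) := (\Ps(x)/\Pt(x))\,g^*(x,y)$ then recovers the softmax form \eqref{eq:kernel_dist}, and the density-ratio prefactor in \eqref{eq:minimizer} arises precisely from this rescaling.

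Next I would take the Fréchet derivative of the regularized expected loss in $H_k$. Using the reproducing property $g(x,y) = \langle g, K((x,y),\cdot)\rangle_{H_k}$ and standard softmax calculus, the stationarity condition takes the form
\begin{align*}
2\lambda g^* \;=\; \mathbb{E}_{\Pt(x)}\!\left[\tfrac{\Ps(X)}{\Pt(X)}\sum_{y}\bigl(P(y|X) - \hat{P}(y|X)\bigr)\,K((X,y),\cdot)\right].
\end{align*}
The crucial step is the change of measure: $(\Ps(x)/\Pt(x))\,\Pt(x) = \Ps(x)$, so the outer expectation collapses to one under $\Ps$. Defining $\alpha(x,y) := (P(y|x) - \hat{P}(y|x))/(2\lambda P(y|x))$ then writes $g^*$ as the single expectation $\mathbb{E}_{\Ps(x)P(y|x)}[\alpha(X,Y)\,K((X,Y),\cdot)]$. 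Evaluating at $(x_i^t, y_i^t)$ via the reproducing property and multiplying by $\Ps(x_i^t)/\Pt(x_i^t)$ yields \eqref{eq:minimizer}; in the linear case, expanding $K((x,y),(x',y')) = \Phi(x,y)\cdot\Phi(x',y')$ immediately recovers \eqref{eq:theta}.

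The hardest part will be analytic rather than algebraic. Justifying the infinite-dimensional gradient calculation requires that the $H_k$-valued map $(x,y)\mapsto K((x,y),\cdot)$ be Bochner integrable under both $\Pt P$ and $\Pt \hat{P}$ so that expectation and Fréchet derivative can be interchanged; this will follow from a bounded-kernel assumption together with integrability of the density ratio, as in de Vito et al. A secondary subtlety is that $\alpha(x,y)$ is only defined where $P(y|x)>0$, so the representation must be interpreted $\Ps(x)P(y|x)$-almost everywhere. Existence and uniqueness of $g^*$ then follow from strict convexity of the softmax log-loss composed with the RKHS-linear exponent and the quadratic $\|g\|_{H_k}^2$ regularizer.
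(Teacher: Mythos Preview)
Your argument is correct and reaches the same conclusion, but the route differs from the paper's in a useful way. The paper absorbs the density ratio into a reweighted feature map $\Phi'(x,y)=\frac{\Ps(x)}{\Pt(x)}\Phi(x,y)$, works in the auxiliary RKHS $H_{k'}$ with kernel $K'=\langle\Phi',\Phi'\rangle$, and then invokes the continuous representer theorem of de~Vito et al.\ as a black box to obtain $f^*=\mathbb{E}_{\Pt P}[\alpha\,K'(\cdot)]$; only afterward does it expand $K'$ in terms of $K$ and perform the change of measure, which produces both the $\Ps(x_i^t)/\Pt(x_i^t)$ prefactor and the switch to $\Ps$ in one algebraic step. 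You instead stay in $H_k$ throughout and re-derive the representer form by direct Fr\'echet differentiation of the softmax loss, which makes the density ratio appear in the first-order condition rather than in the kernel. The paper's version is shorter and more modular; yours is more self-contained and, notably, yields an explicit expression $\alpha(x,y)=(P(y|x)-\hat{P}(y|x))/(2\lambda\,P(y|x))$ that the paper leaves abstract---this explicit form in fact anticipates the gradient computation the paper carries out separately in its Corollary~\ref{cor:gradient}. Your analytic caveats (Bochner integrability under a bounded-kernel assumption, the $P(y|x)>0$ caveat for $\alpha$) are appropriate and not addressed in the paper's proof.
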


\begin{proof}
Defining 
$\Phi'(x,y)\triangleq \frac{\Ps(x)}{\Pt(x)}\Phi(x,y)$, the robust bias-aware label distribution can be rewritten as 
$\hat{P}(y|x) = e^{\theta \cdot \Phi'(x,y)} /
Z(x)$, with $Z(x)=\sum_{y' \in \mathcal{Y}} e^{\theta\cdot \Phi'(x,y')}$. The objective function \eqref{eq:reg_loss} is then:
\begin{align}
&\mathbb{E}_{\Pt(x)P(y|x)}[-\log \hat{P}(Y|X)]+\lambda||\theta||^2_2\\
&=\mathbb{E}_{\Pt(x)P(y|x)}[-f(X,Y)+\log Z(X)]+\lambda ||\theta||^2_2\notag,
\end{align}
where $f(x,y)=\langle \Phi'(x,y), \theta\rangle$ is the function that we aim to find that minimizes this regularized expected loss.
Let $K'$ be a positive definite real valued kernel on $H_k'$, 
according to the generalized representer theorem  \cite{de2004some} in this expected risk case, the minimizer $f^*$
takes 
the form:
\begin{align}
f^*(x_{i}^t,y_{i}^t)=\mathbb{E}_{\Pt(x)P(y|x)}[\alpha(X,Y)K'((x_{i}^t,y_{i}^t),(X,Y))],\notag
\end{align}
where $K'((x_{i}^t,y_{i}^t),(x,y))=\langle \Phi'(x_{i}^t,y_{i}^t) ,\Phi'(x,y)\rangle$. 
Since the target label is not 
available in training, the minimizer cannot be represented directly by target data. 
Instead, we 
represent it using 
source data, which, 
for each $1 \le i \le m$, 
is: 
{\small
\begin{align}
 &f^*(x_{i}^t,y_{i}^t) =  \notag\\
&\mathbb{E}_{\Pt(x)P(y|x)}\bigg[\frac{\Ps(x_{i}^t)}{\Pt(x_{i}^t)}\frac{\Ps(X)}{\Pt(X)}\alpha(X,Y) 
K((x_{i}^t,y_{i}^t),(X,Y))\bigg]\notag\\
&=\frac{\Ps(x_{i}^t)}{\Pt(x_{i}^t)}\mathbb{E}_{\Ps(x)P(y|x)}\left[\alpha(X,Y)K((x_{i}^t,y_{i}^t),(X,Y))\right].\notag
\end{align}
}
Given $f(x,y)=\langle \Phi'(x,y), \theta\rangle=\frac{\Ps(x)}{\Pt(y)}\theta\cdot \Phi(x,y)$, we obtain $\theta=\mathbb{E}_{\Ps(x)P(y|x)}\left[\alpha(X,Y)\Phi(X,Y)\right]$.

\end{proof}

\subsection{Kernel RBA parameter estimation}
As in the non-kernelized RBA model, the objective function  
\eqref{eq:reg_loss} is defined in terms of the labeled target  
distribution data, which is unavailable. 
However, the parametric model's form \eqref{eq:kernel_dist} bypasses this difficulty 
when employing the kernelized minimizer 
\eqref{eq:minimizer}.  
In order to estimate the parameters $\{\alpha(x,y)\}$, 
we derive the gradient of the kernel RBA predictor.

\begin{corollary}[of Theorem \ref{thm:represent}]
The gradient (with respect to kernelized parameters $\alpha$) of the
regularized expected loss is obtained by approximating 
kernel evaluations under the source distribution with source sample
kernel evaluations.
\label{cor:gradient}
\end{corollary}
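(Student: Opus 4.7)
The plan is to compose the gradient with respect to $\theta$ from \eqref{eq:gradient} with the linear dependence of $\theta$ on $\alpha$ established in \eqref{eq:theta}, then observe that each inner product of features that arises collapses to a kernel evaluation. In the finite-sample setting, \eqref{eq:theta} specializes to $\theta = \tfrac{1}{n}\sum_{j=1}^{n}\alpha(x_j^s,y_j^s)\,\Phi(x_j^s,y_j^s)$, so the free parameters of $\alpha$ reduce to its values at the $n$ source samples.

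First I would apply the chain rule. Because $\theta$ depends linearly on $\alpha(x_j^s,y_j^s)$ with derivative $\tfrac{1}{n}\Phi(x_j^s,y_j^s)$, and because the regularized loss \eqref{eq:reg_loss} depends on $\alpha$ only through $\theta$, we have
\begin{align}
\frac{\partial \mathcal{L}}{\partial \alpha(x_j^s,y_j^s)} \;=\; \tfrac{1}{n}\,\Phi(x_j^s,y_j^s)\cdot\nabla_\theta \mathcal{L}.\notag
\end{align}
Substituting \eqref{eq:gradient} for $\nabla_\theta \mathcal{L}$ and using the RKHS identity $\Phi(x,y)\cdot\Phi(x_j^s,y_j^s) = K((x,y),(x_j^s,y_j^s))$ converts each of the three pieces into kernel expressions: the model-expectation term becomes an expectation of $K$ under $\Ps(x)\hat{P}(y|x)$, the feature-matching term becomes an expectation of $K$ under the source joint $\Ps(x)P(y|x)$, and the regularization contribution $2\lambda\,\theta\cdot\Phi(x_j^s,y_j^s)$ becomes an $\alpha$-weighted expectation of $K$ after applying \eqref{eq:theta} a second time. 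Replacing each source expectation by its empirical source-sample average---the same approximation already used to pass from the true gradient to \eqref{eq:gradient}---yields a formula that queries the kernel only on pairs of source samples and on the current iterate of $\alpha$, which is precisely what the corollary asserts.

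The main obstacle is conceptual rather than computational: I need to confirm that restricting $\alpha$ to a finite-dimensional parametrization at the source samples, rather than as a function on $\mathcal{X}\times\mathcal{Y}$ as in the continuous representer expression \eqref{eq:minimizer}, loses nothing. This follows by noting that the loss depends on $\alpha$ only through $\theta$, and under the empirical version of \eqref{eq:theta} the vector $\theta$ lies in the span of $\{\Phi(x_j^s,y_j^s)\}_{j=1}^{n}$; any component of $\alpha$ orthogonal to these directions is invisible to the objective. Once this reduction is granted, the corollary reduces to the chain-rule substitution outlined above.
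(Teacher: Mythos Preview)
Your proposal is correct and follows essentially the same route as the paper: substitute the representation \eqref{eq:theta} of $\theta$ in terms of $\alpha$ into the regularized target loss, differentiate (which amounts to your chain rule through $\nabla_\theta$), use $\langle\Phi,\Phi\rangle=K$ to rewrite every feature inner product as a kernel evaluation, and then replace source expectations by source-sample averages. The only cosmetic differences are that the paper differentiates the population objective first and approximates at the end (whereas you specialize to the empirical $\theta$ up front and invoke the already-approximated \eqref{eq:gradient}), and the paper's displayed regularization term carries a coefficient $\lambda$ where your $2\lambda$ is in fact the correct one.
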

\begin{proof}
Plugging \eqref{eq:theta} into \eqref{eq:reg_loss}, we obtain the form of the
objective function represented by kernels and take derivatives 
with respect to $\alpha(x',y')$:

{\small
\begin{align}
&\frac{\partial}{\partial\alpha(x',y')}\left(\mathbb{E}_{\Pt(x)P(y|x)}[-\log P_{\theta}(Y|X)]+\lambda||\theta||^2_2\right)\notag\\
&=-\mathbb{E}_{\Ps(x)P(y|x)}[K((x', y'),(X, Y))] \notag\\
&+\mathbb{E}_{\Ps(x)\hat{P}(y|x)}[K((x', y'),(X, Y))]\notag\\
&+\lambda\mathbb{E}_{\Ps(x'')P(y''|x'')}[\alpha(X'',Y'')K((x',y'),(X'',Y''))]\notag\\
&\approx-\mathbb{E}_{\tilde{P}_{\text{src}}(x)\tilde{P}(y|x)}\left[K((x',y'),(X,Y))\right]\notag\\
 &+\mathbb{E}_{\tilde{P}_{\text{src}}(x)\hat{P}(y|x)}\left[K((x',y'),(X,Y))\right]\notag\\
&+\lambda\mathbb{E}_{\tilde{P}_{\text{src}}(x'')\tilde{P}(y''|x'')}[\alpha(X'',Y'')K(x',y'),(X'',Y'')].\notag
\end{align}
}
\end{proof}

Corollary ~\ref{cor:gradient} indicates that the computation of the gradient only requires 
source samples. 
This requires an approximation of the source distribution's expected kernel %
evaluations with the empirical evaluations of the sample mean. The reason for the approximation is rooted in the idea of minimizing the exact expected target loss directly in kernel RBA. Consequently, we need to use the empirical gradient to approximate the true gradient. However, the error
can be controlled using standard finite sample bounds, like Hoeffding bounds, so that the corresponding error in the objective is also bounded. On the contrary, importance weighted empirical risk minimization (ERM) methods do not approximate the gradient, but approximate the training objective from the beginning as in \eqref{eq:reweight}, which is essentially different from our method.

{
\begin{figure*}[htp] 
\begin{center}
\begin{tabular}{cccc}
(a) Linear-100  & (b)  Gaussian-200 & (c)  Gaussian-300 & (d)  Gaussian-400 \\
\includegraphics[height=3.6cm]{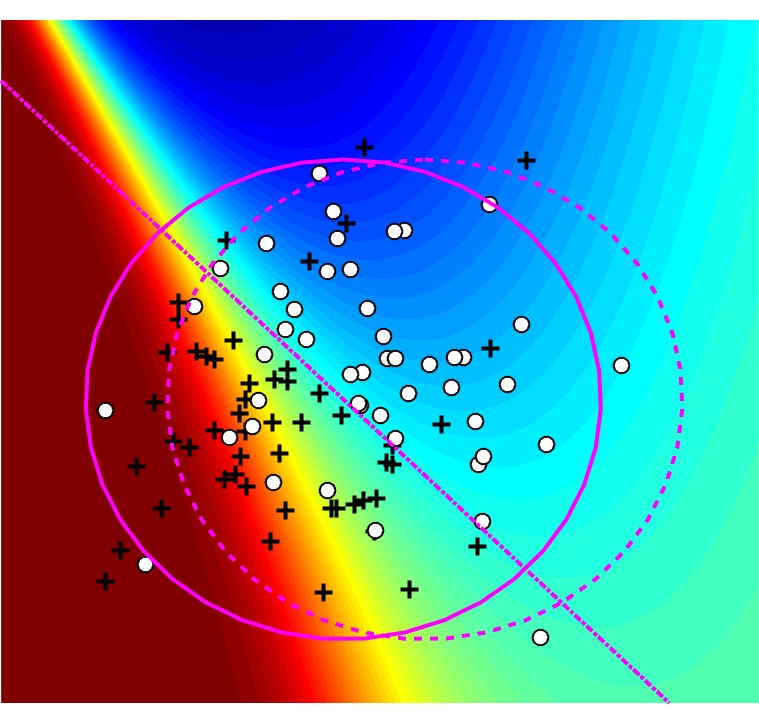} &
\includegraphics[height=3.6cm]{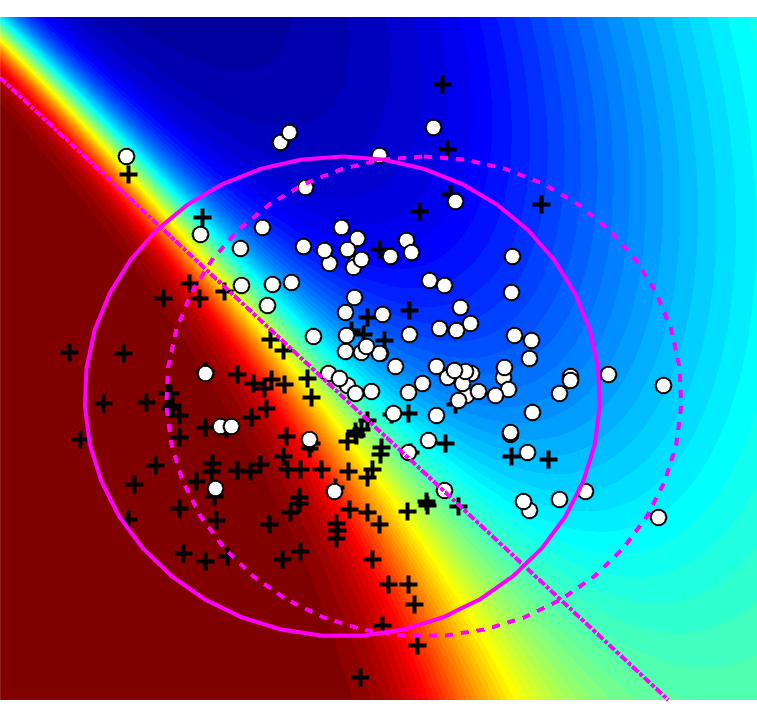}&
\includegraphics[height=3.6cm]{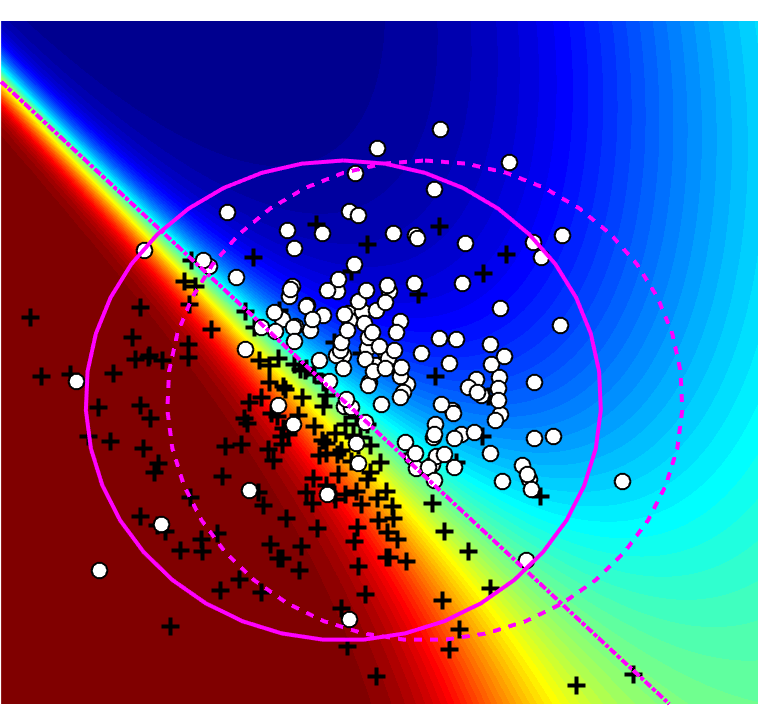} &
\includegraphics[height=3.6cm]{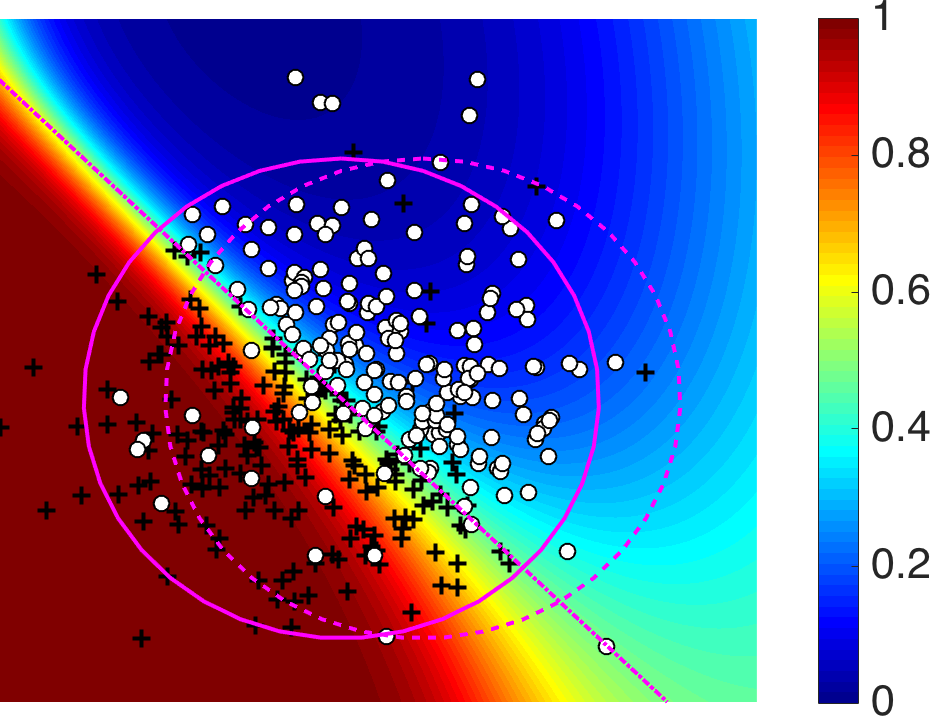}\\
accuracy: 0.760&accuracy: 0.774 &accuracy: 0.789& accuracy: 0.815\\
\end{tabular}
\caption{Convergence of decision boundary in robust bias aware classifier using 
linear features on 100 samples (a), using Gaussian kernels on 200 samples (b), on 300 samples (c) and on 400 samples (d), with 20\% noise in each example.
Ellipses show source and target data distribution that closely overlap. The tiled line shows the true decision boundary.
With an increasing number of samples and universal kernels, the true decision boundary is recovered with accuracy gradually converging to optimal.
}
\label{fig:convergence}
\end{center}
\end{figure*}
}

%

\subsection{Understanding Kernel RBA}
In order to illustrate the effectiveness of kernel RBA, we consider the same datasets from Figure~\ref{syn_cmp} and compare
linear RBA and kernel RBA with different kernel types and parameters in Figure~\ref{fig:simulation}. Even though kernel methods are
usually regarded as a way to introduce non-linearity, its main effect in kernel RBA is the expansion of the constraint space for
the adversarial player $\check{P}(Y|X)$ in the two player game in (\ref{eq:game}). As in Figure~\ref{fig:simulation}, 
kernel RBA achieves better (smaller target logarithmic loss) and more informative (smaller target prediction entropy) predictions in
the intersection of source and target distribution, while the true decision boundary is a linear one. Note that here the Gaussian kernel has a large bandwidth to obtain a more linear decision boundary for better visualization.
Moreover, the difference
between target entropy and logarithmic loss gradually gets smaller in the last three figures. This corresponds with the property of RBA that
target logarithmic loss is always upper bounded by the target entropy (with high probability), as proven for a general case in previous literature  \cite{liu2015shift}. Therefore, when a larger number of constraints are imposed, 
i.e., kernel methods are applied, it forms a more restrictive constraint set for $\check{P}(Y|X)$ so that target entropy will bound target loss more and more tightly.

Note that the choice of kernel method and kernel parameters depends on the specific learning problem because we also need to account for overfitting issues in practice. The amount of bias also plays a role in how more source constraints brought by kernel methods help improve over RBA method. Specifically, the larger the bias is, the more RBA will suffer from insufficient constraints from source sample data, which results in larger entropy in target predictions.

\subsection{Consistency Analysis}
We now analyze some theoretical properties of the kernel RBA method. As stated before, kernel RBA directly minimizes the regularized expected target loss.
We start with defining this expected target loss explicitly, parameterized by learned $\theta$, at a specific data point $(x, y)$ as: 
$
L_{RBA} (x, y) 
= 
\gamma(\theta, x, y) - \log Z$,
where $\gamma(\theta, x, y) = \frac{P_{\text{src}}(x)}{P_{\text{trg}}(x)}\theta \Phi(x, y)$ and $\log Z$ is the normalization term. 
\begin{theorem} \label{thm:consistent}
Let $k$ be an bounded universal kernel, and regularization $\lambda$ tending to zero slower than $1/m$ for the kernel RBA method, with $\hat{\theta}$ as the parameter in the resulting predictor, then $\mathbb{E}_{P_{\text{trg}(x, y)}}[L_{RBA}(\hat{\theta}, x, y)] - \mathbb{E}_{P_{\text{trg}}(x, y)}[L_{RBA}( \theta^*,x, y)] \xrightarrow{a.s.} 0$. 
\end{theorem}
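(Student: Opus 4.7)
The plan is to adapt the classical consistency analysis for regularized kernel methods (in the spirit of Steinwart) to the RBA setting by decomposing the excess target risk into an approximation error and an estimation error. Let $\mathcal{E}(\theta) = \mathbb{E}_{\Pt}[L_{\text{RBA}}(\theta, x, y)]$ and let $\theta_\lambda = \argmin_{\theta \in H_k}\, \mathcal{E}(\theta) + \lambda \|\theta\|_k^2$ denote the population regularized minimizer in the RKHS. I would then write
\begin{align*}
\mathcal{E}(\hat{\theta}) - \mathcal{E}(\theta^*) = \bigl[\mathcal{E}(\theta_\lambda) - \mathcal{E}(\theta^*)\bigr] + \bigl[\mathcal{E}(\hat{\theta}) - \mathcal{E}(\theta_\lambda)\bigr]
\end{align*}
and treat the two summands separately.

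For the approximation error, I would invoke the universality of $k$: the RKHS of a universal kernel is dense in $C(\mathcal{X}\times\mathcal{Y})$ under the sup norm, so any continuous parameterization $\theta^* \cdot \Phi$ can be approximated arbitrarily well by some element of $H_k$. Combined with $\lambda \to 0$ and continuity of $L_{\text{RBA}}$ in $\theta \cdot \Phi$ (through the $\log Z$ term, which is $1$-Lipschitz in its exponents), a standard density argument of the Steinwart type yields $\mathcal{E}(\theta_\lambda) - \mathcal{E}(\theta^*) \to 0$.

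For the estimation error, I would exploit Corollary~\ref{cor:gradient}: the gradient of the regularized objective reduces to expectations under $\Ps$ of bounded kernel evaluations, which concentrate uniformly around their expectations at rate $O(1/\sqrt{m})$ by Hoeffding's inequality together with a standard covering argument over the kernel-parameterized class. The $\lambda$-strong convexity contributed by the $\lambda\|\theta\|^2$ term turns this gradient-level concentration into an $O(1/(\lambda\sqrt{m}))$ bound on the excess regularized risk, and hence on $\mathcal{E}(\hat{\theta}) - \mathcal{E}(\theta_\lambda)$. The hypothesis that $\lambda \to 0$ slower than $1/m$ guarantees $\lambda\sqrt{m} \to \infty$, so this term vanishes; summable exponential tails plus Borel--Cantelli upgrade convergence in probability to the claimed almost-sure convergence.

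The main obstacle is the importance-ratio factor $\Ps(x)/\Pt(x)$ appearing inside $\gamma(\theta, x, y)$ and hence inside $\log Z(x)$: under $\Pt$ this ratio can be unbounded, which would break naive sup-norm bounds on $L_{\text{RBA}}$. The RBA construction circumvents this because, as in Corollary~\ref{cor:gradient}, the gradient can be written as an expectation under $\Ps$, where the ratio cancels and only bounded kernel evaluations remain. Carrying the concentration and Lipschitz arguments through on this $\Ps$-sided representation --- and then translating parameter deviations $\hat{\theta} - \theta_\lambda$ back into risk deviations of $\mathcal{E}$ under $\Pt$ without reintroducing the unbounded ratio --- is where the proof demands the most care.
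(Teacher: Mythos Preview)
Your approach is essentially the paper's approach spelled out in detail: the paper's proof is a three-sentence sketch that (i) asserts $L_{\text{RBA}}$ is a Lipschitz loss of logistic-regression type, (ii) invokes Theorem~\ref{thm:represent} to justify the source-side representation, and (iii) cites Steinwart's consistency result for Lipschitz losses under universal kernels. Your approximation/estimation decomposition, the density argument from universality, and the concentration-plus-strong-convexity analysis are precisely what the Steinwart citation is meant to deliver, and your use of Corollary~\ref{cor:gradient} to pass to source-side expectations mirrors the paper's appeal to Theorem~\ref{thm:represent}. In that sense you are not taking a different route but rather unpacking the black box the paper leaves closed, and you are more candid than the paper about the unbounded-ratio obstacle.

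One genuine slip: your claim that ``$\lambda \to 0$ slower than $1/m$ guarantees $\lambda\sqrt{m} \to \infty$'' is false --- e.g.\ $\lambda_m = m^{-3/4}$ goes to zero slower than $1/m$ yet $\lambda_m\sqrt{m} = m^{-1/4} \to 0$. With your stated $O(1/(\lambda\sqrt{m}))$ estimation bound the hypothesis would need to be $\lambda$ slower than $m^{-1/2}$, not $m^{-1}$. To match the paper's hypothesis you would need the sharper Steinwart-type estimation bound of order $O\bigl((m\lambda)^{-1/2}\bigr)$ (or $O(1/(m\lambda))$) for the excess regularized risk, which is what the cited consistency result actually provides; with that rate, $m\lambda \to \infty$ suffices and the argument closes.
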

\begin{proof}
$L_{RBA}$ is a Lipschitz loss because it follows the basic form of logistic loss except $\gamma(\theta, x, y)$ consists of one more
component: 
the density ratio.  Given Theorem ~\ref{thm:represent}, the minimizer of expected target $L_{RBA}$ can be represented using source samples. It implies that kernel RBA
is consistent w.r.t $\mathbb{E}_{P_{\text{trg}(x, y)}}[L_{RBA}(\theta, x, y)]$ when equipped with a universal kernel \cite{micchelli2006universal} in source data, assuming $\frac{\Ps(x)}{\Pt(x)}$ is accurate, according
to consistency properties for Lipschitz loss  \cite{steinwart2005consistency}. 
\end{proof}
Next, we explore whether the optimal expected $L_{RBA}$ on the target distribution $\mathbb{E}_{P_{\text{trg}}(x, y)}[L_{RBA}( \theta^*,x, y)]$ indicates the optimal 0-1 loss on the target distribution\footnote{We assume the density 
ratio $\Ps(x)/\Pt(x)$ is accurately estimated in this case and leave the analysis for the case when it is
approximate to future work.}. 
\begin{corollary}[of Theorem \ref{thm:consistent}]
For any pair of distributions that $P_{\text{src}}(x) > 0$, $P_{\text{trg}}(x) > 0$ and $P_{\text{src}}(y|x) = P_{\text{trg}}(y|x)$, if $\hat{\eta}(x)$ is the kernel RBA 
predictor satisfying all the conditions in Theorem ~\ref{thm:consistent}, then $\mathbb{E}_{P_{\text{trg}}(x, y)}[L_{0-1}(\hat{\eta}(x), y)] - \mathbb{E}_{P_{\text{trg}}(x, y)}[L_{0-1}( \eta^*(x), y)] \xrightarrow{a.s.} 0$.
\end{corollary}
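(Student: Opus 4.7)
The plan is to treat the corollary as a standard surrogate-to-zero-one consistency argument, combining Theorem~\ref{thm:consistent} with a classification-calibration property for $L_{RBA}$. The key structural observation is that the kernel RBA classifier $\hat{\eta}(x) = \argmax_y \hat{P}(y|x)$ equals $\argmax_y \theta \cdot \Phi(x,y)$: the density ratio $\Ps(x)/\Pt(x)$ that multiplies the logits in \eqref{eq:dist} is strictly positive at every $x$ by the hypothesis $\Ps(x),\Pt(x)>0$, so it factors out of the argmax and does not affect the induced label prediction. This means the 0-1 risk of $\hat{\eta}$ is determined by the underlying RKHS scoring function $\theta \cdot \Phi$ alone.

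First I would verify classification calibration, i.e.\ that the population minimizer of $\mathbb{E}_{\Pt(x)P(y|x)}[L_{RBA}(\theta,X,Y)]$ induces the Bayes classifier $\eta^*$. Viewing $L_{RBA}$ as the negative log-likelihood of a softmax model with logits $g(x,y) = \tfrac{\Ps(x)}{\Pt(x)}\theta\cdot\Phi(x,y)$, the pointwise inner minimization at each $x$ is $\min_{g(x,\cdot)} \{-\sum_y P(y|x) g(x,y) + \log \sum_{y'} e^{g(x,y')}\}$, which is optimized at $g(x,y) = \log P(y|x) + c(x)$ and thus yields $\hat{P}(y|x) = P(y|x)$ and $\argmax_y \hat{P}(y|x) = \eta^*(x)$. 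Because $k$ is universal, $\{\theta\cdot\Phi\}$ is dense in $C(\mathcal{X}\times\mathcal{Y})$, and because $\Ps/\Pt$ is strictly positive and finite on the common support, the reweighted family $\{\tfrac{\Ps}{\Pt}\theta\cdot\Phi\}$ is also dense, so the pointwise optimum is approachable in $H_k$.

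Second, I would invoke the surrogate-to-zero-one transfer inequality for classification-calibrated losses, in the style of Bartlett--Jordan--McAuliffe or Zhang: for such a loss there exists a nondecreasing $\psi$ with $\psi(0)=0$ such that the excess 0-1 risk on $\Pt$ is bounded by $\psi^{-1}$ applied to the excess surrogate risk on $\Pt$. Theorem~\ref{thm:consistent} delivers $\mathbb{E}_{\Pt(x,y)}[L_{RBA}(\hat{\theta},X,Y)] - \mathbb{E}_{\Pt(x,y)}[L_{RBA}(\theta^*,X,Y)] \xrightarrow{a.s.} 0$, and plugging this into the calibration inequality yields $\mathbb{E}_{\Pt(x,y)}[L_{0\text{-}1}(\hat{\eta}(X),Y)] - \mathbb{E}_{\Pt(x,y)}[L_{0\text{-}1}(\eta^*(X),Y)] \xrightarrow{a.s.} 0$, which is the desired statement.

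The main obstacle is making the calibration step rigorous for this weighted surrogate rather than the standard multinomial logistic loss. The density-ratio multiplier in the logits, together with the target-distribution expectation defining the risk, means one must verify both that (i) the reweighted logit class remains dense in the relevant function space under a universal kernel (which uses positivity and continuity of $\Ps/\Pt$ on the common support), and (ii) the pointwise conditional risk of $L_{RBA}$ at each $x$ is minimized at a function whose argmax is $\eta^*(x)$ (which uses the fact that a positive, $x$-dependent rescaling of the logits leaves the softmax parametrization able to realize every conditional $P(\cdot\mid x)$ in the simplex). Once these two points are dispatched, the rest of the proof is a direct application of the consistency guarantee already established in Theorem~\ref{thm:consistent}.
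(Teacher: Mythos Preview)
Your proposal is correct and follows the same overall architecture as the paper: establish surrogate consistency via Theorem~\ref{thm:consistent}, then transfer to 0--1 loss via a calibration argument. The difference is in how the transfer is executed. The paper invokes the \emph{proper composite loss} framework of Reid--Williamson and Vernet et~al., obtaining the explicit quadratic lower bound $L_{RBA}(\eta,\hat\eta)-L_{RBA}(\eta,\eta)\ge \tfrac{C}{2}(\hat\eta-\eta)^2$, and then chains this through the plug-in classifier inequality $\mathbb{E}[L_{0\text{-}1}(\hat h)-L_{0\text{-}1}(h^*)]\le 2\sqrt{\mathbb{E}[(\hat\eta-\eta^*)^2]}$ to get a square-root rate relating 0--1 excess risk to $L_{RBA}$ excess risk. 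You instead appeal to the Bartlett--Jordan--McAuliffe calibration framework, producing an abstract nondecreasing $\psi$ without an explicit rate. Both are standard and valid; the paper's route is more quantitative (it yields a concrete $\sqrt{\cdot}$ dependence), while yours is slightly more general and, notably, more careful on two points the paper leaves implicit: that the positive density ratio $\Ps(x)/\Pt(x)$ drops out of the $\argmax$ defining the plug-in label, and that universality of $k$ together with strict positivity of $\Ps/\Pt$ ensures the reweighted logit class is still rich enough to approach the pointwise optimum.
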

\begin{proof}
$L_{RBA}$ is a proper composite loss in both the binary \cite{reid2010composite} and multi-class cases \cite{vernet2011composite}, which means it satisfies $L_{RBA}(\eta, \hat{\eta}) - L_{RBA}(\eta, \eta) \ge \frac{C}{2}(\hat{\eta} - \eta)^2 $ for any $\eta, \hat{\eta} \in [0, 1]$, where $\eta$ is the Bayes conditional label probability, $\hat{\eta}$ is the estimated label probability function $\eta(\hat{\theta}, x)$ from RBA (\ref{eq:dist}) and $C > 0$ is a constant.  We then have target expected 0-1 regret be bounded by the expected $L_{RBA}$ regret:
\begin{align}
&\mathbb{E}_{P_{\text{trg}}(x, y)}[L_{0-1}(\hat{h}(x), y)] - \mathbb{E}_{P_{\text{trg}}(x, y)}[L_{0-1}(h^*(x), y)]\notag \\
& \le 2\sqrt{\mathbb{E}_{P_{\text{trg}}(x, y)}[\hat{\eta}(x) - \eta^*(x)]^2}\notag\\
&  \le 2 \sqrt{\frac{2}{C} \mathbb{E}_{P_{\text{trg}}(x, y)}[L_{RBA}(\hat{\eta}(x)) - L_{RBA}(\eta^*(x))]},  \notag
\end{align}
where $h$ is a predictor function that maps conditional label probability $\eta(x)$ to label. Here the first inequality is due to property of plug-in classifiers and Jensen's inequality and the second inequality directly comes from the definition of proper loss. 
Therefore, according to Theorem ~\ref{thm:consistent}, kernel RBA is consistent w.r.t $L_{RBA}$, and we then conclude that $\mathbb{E}_{P_{\text{trg}}(x, y)}[L_{0-1}(\hat{\eta}(x), y)] - \mathbb{E}_{P_{\text{trg}}(x, y)}[L_{0-1}( \eta^*(x), y)] \xrightarrow{a.s.} 0$.
\end{proof}
Note that employing a universal kernel is a sufficient condition for consistency to hold. 
Therefore, kernel methods not only provide a larger number of features without increasing computational burdens, but also facilitate the theoretical property to hold
for kernel RBA. 

We demonstrate how the true decision boundary in the target distribution is recovered with an increasing number of samples when source and target distribution are fairly close in Figure \ref{fig:convergence}. As shown in the first figure, the decision boundary in the linear case is tilted due to the noise. Equipped with more samples and a universal kernel (Gaussian kernel), the decision boundary is shifted to align with the true one. At the same time, the accuracy on target data gets better and better, roughly converging to the optimal. This property of kernel RBA corresponds to Corollary \ref{thm:consistent} that the 0-1 loss of kernel RBA should converge to the optimal 0-1 loss in the limit.

As a comparison, we show the plots of logloss and accuracy of Kernel IW (solid line) and Kernel Robust (dashed line) methods after 20 repeated experiments using increasing number of samples in Figure \ref{fig:Loss_converge}. The dataset is similar with the example in Figure \ref{fig:convergence} with 10\% noise and source and target distribution closely overlapped. The kernel used here is Gaussian kernel. As shown in the error bars, even though the importance weighted loss converges to the target loss in the limit in theory, it suffers from larger variance and sensitivity to noise in reality when there is only limited number of samples. The reason is that it can be dominated by data with large $\Pt(x)/\Ps(x)$ weights, like points with `+' labels in the right-upper corner in Figure \ref{fig:convergence}. Those noise points will push the decision boundary to the left-bottom direction in order to suffer less logloss. On the other hand, Kernel Robust is more robust to noise and keeps reducing the variance and improving the mean logloss and accuracy. This is not only due to the inherently more modest predictions that robust methods produce on biased target distribution, but also due to the consistency property it enjoys as stated in Theorem \ref{thm:consistent} and Corollary \ref{thm:consistent}.
Even though the number of samples is still small and limited here, the source and target distribution is close enough to reflect the convergence tendency with the increasing of source samples.

{
\begin{figure}[htp] 
\begin{center}
\begin{tabular}{cc}
(a) Logloss  & (b)  Accuracy \\
\includegraphics[height=3cm]{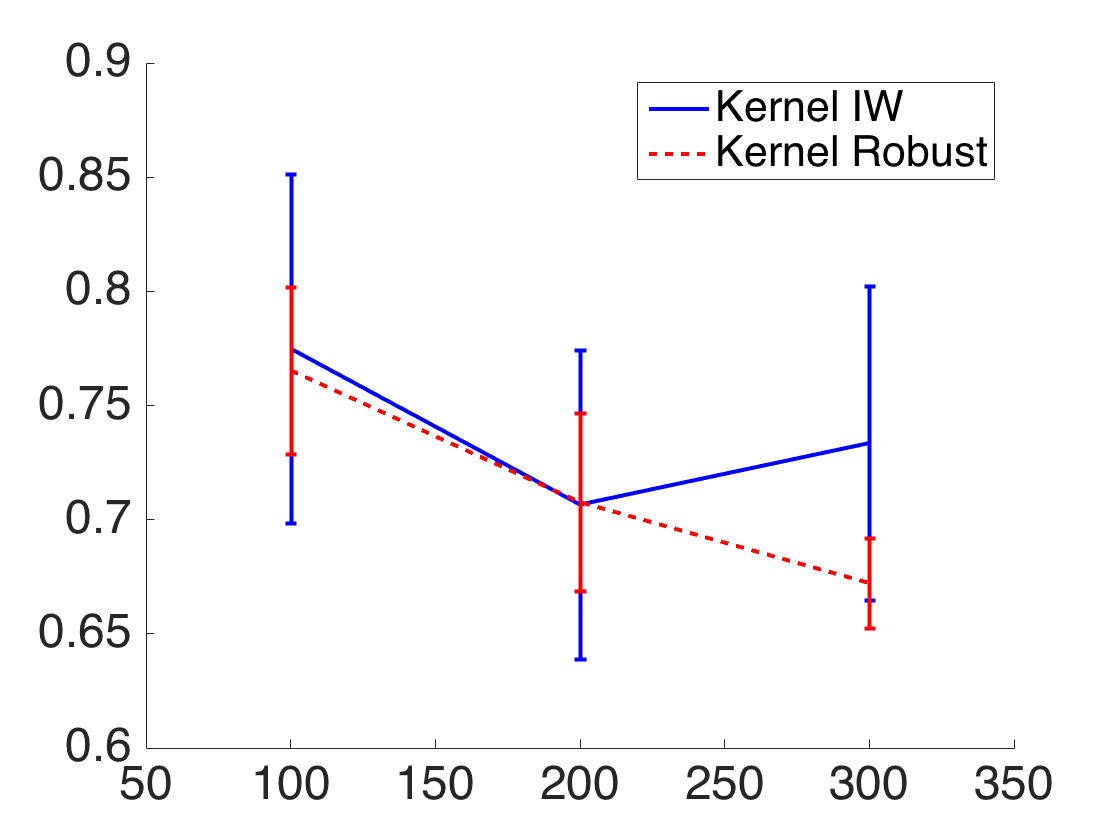} &
\includegraphics[height=3cm]{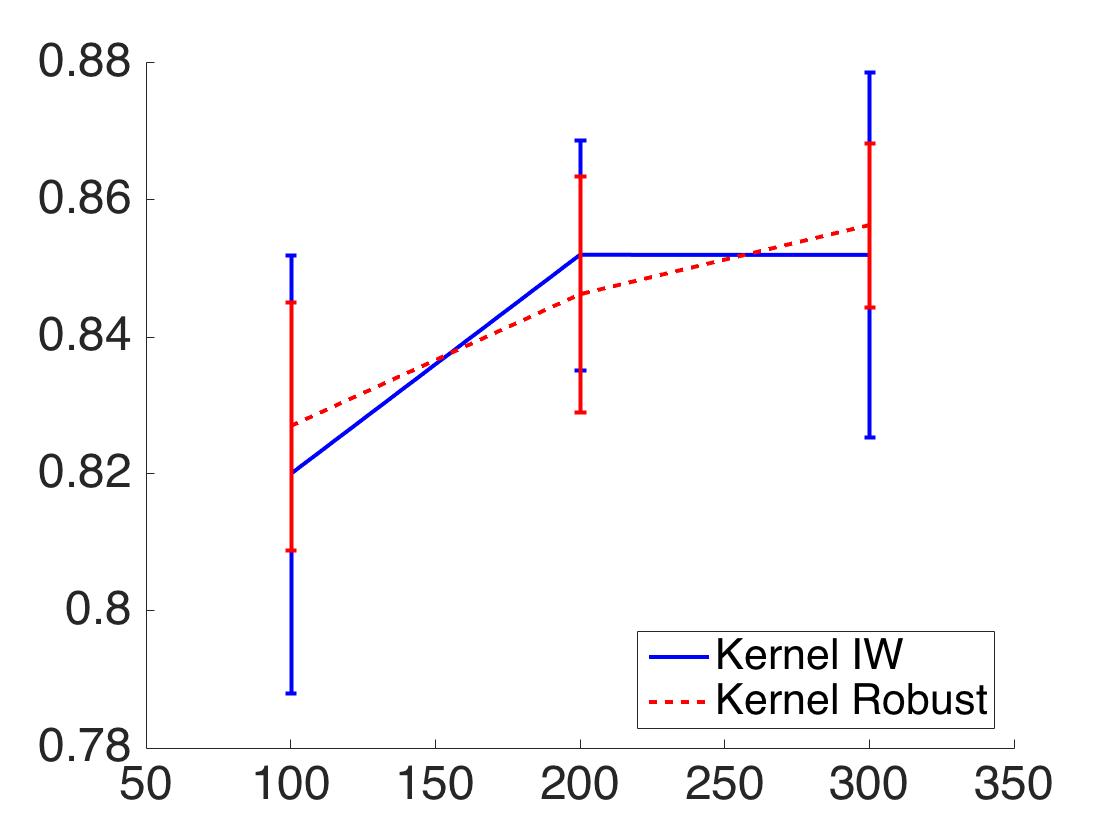}\\
\end{tabular}
\caption{Logloss and accuracy plots as sample size increases from 100 to 300 in kernel IW and kernel Robust methods, with Gaussian kernel, for datasets similar in Figure \ref{fig:convergence}. The error bar shows the 95\% confidence interval of the sampling distribution after 20 repeated experiments. IW methods suffer from large variance as robust methods gradually reduce variance and improves on logloss and accuracy more consistently.
}
\label{fig:Loss_converge}
\end{center}
\end{figure}
}

\section{Experiments}
In this section, we demonstrate the advantages of our kernel RBA approach 
on datasets that are either synthetically  
biased via sampling 
or naturally biased by a differing characteristic or noise.
We chose three datasets from the UCI repository  \cite{UCI,dataset_vehicle} for synthetically biased experiments, based on the criteria that each contains approximately 
1,000 or more examples and has minimal missing values. They are \verb|Vehicle|, \verb|Segment| and \verb|Sat|. For each dataset, we
 synthetically  
generate 20
separate 
experiments by taking 
200 source samples and 200 target data samples 
from it, generally following the sampling procedure described in \citeauthor{huang2006correcting} (\citeyear{huang2006correcting}), which 
we summarize as: 
\begin{enumerate}[topsep=0pt,itemsep=-1ex,partopsep=1ex,parsep=1ex]
\item Separate the data into source and target portion according to mean of a variable;
\item Randomly sample the target portion as the target dataset;
\item In the source portion, calculate the sample mean $\mu$ and sample covariance $\sigma$, then 
sample in proportion to weights generated from a multivariate Gaussian with $\mu' = \mu/5 $ and $\sigma' = \sigma/5$ as the source dataset. If the dimension
is too large to sample any points, perform PCA first and use the first several principle components to obtain the weights.
\end{enumerate}

We also investigate 
three naturally biased 
covariate shift datasets. One of them is \verb|Abalone|,
in which we use the sex variable (male, female, and infant) to create bias. Specifically, we use infant as source samples and the rest as target samples. 
Note that we use the simplified 3-category classification problem of the \verb|Abalone| dataset as described in Clark 
\emph{et al.} \cite{clark1996quantitative} and also sample 200 data points respectively for the source and target datasets.
We chose this data because the sex variable makes source-target separation easier and reasonable, 
and allows the covariate shift assumption to generally hold. In addition, we evaluate our methods on the MNIST dataset \cite{lecun1998gradient}, which we reduc to binary predictive tasks of differentiating `3' versus `8' and `7' versus `9'. We add a biased Gaussian noise with mean 0.2 and standard deviation 0.5 to the testing data to form the covariate shift, i.e. noise $z \sim N(0.2, 0.5)$. We randomly sample 2000 training and testing samples and repeat the experiments 20 times. Shown in Figure \ref{fig:mnist} is the comparison between one batch of training samples and testing samples. 

{
\begin{figure}[htp] 
\begin{center}
\begin{tabular}{cc}
(a) Training Samples  & (b) Testing Samples  \\
\includegraphics[height=2cm]{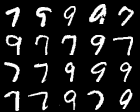} &
\includegraphics[height=2cm]{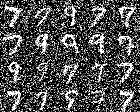}\\
\end{tabular}
\caption{Binarized MNIST data with noise added to the testing set to form covariate shift.
}
\label{fig:mnist}
\end{center}
\end{figure}
}

{
\begin{table*}[t]
\begin{center}
\caption{Average Target Logloss Comparison} \label{tab:result_average_1}
{\small
\begin{tabular} { |K{1.8cm}| K{2cm}| K{1.8cm}| K{1.8cm} || K{1.5cm} | K{1.5cm}| K{1.5cm}|}
\hline 
{\bf Dataset} & {\bf Kernel Robust} &{\bf Kernel LR} 
& {\bf Kernel IW}&{\bf Robust}&{\bf LR}&{\bf IW}\\
\hline
\verb|Vehicle| & {\bf 1.92} & 16.41 & 87.69& 1.94  & 8.15 & 4.94 \\
\hline
\verb|Segment| & {\bf 2.53} & 9.62& 83.75 & 2.55& 4.37& 4.01\\
\hline 
\verb|Sat| & {\bf 2.44} & 205.27&  111.57& 2.57 & 13.27&8.95 \\
\hhline{|=|=|=|=#=|=|=|}
\verb|Abalone|& {\bf 1.58} &  8.52 &  6.91  &{\bf 1.59}& 8.73& 2.09 \\
\hline
\verb|MNIST-7v9|& {\bf 0.42} &  0.44 &  0.49  &0.55& 0.80& 0.59\\
\hline
\verb|MNIST-3v8 |& {\bf 0.39} &  0.46 &  0.41  &0.48& 0.84& 0.60 \\
\hline
\end{tabular}
}
\end{center}
\end{table*}
}

\subsection{Methods}

We evaluate our approach and five other methods:\\ 
%
{\bf Kernel robust bias aware classifier (Kernel Robust)} adversarially
minimizes the target distribution logloss using kernel methods, trained using direct gradient calculations as in Corollary  \ref{cor:gradient}.\\
{\bf Kernel logistic regression (Kernel LR)} 
ignores the covariate shift and
maximizes the source data conditional likelihood, $\max_{\theta} \mathbb{E}_{\Ps(x)P{(y|x)}} \left[\log P_{\theta} (Y|X) \right] - \lambda \| \theta \|^2_2  $, where $ \hat{P}_{\theta} (y|x) = \frac{\exp(\theta \cdot \Phi(x,y))}{\sum_{y' \in \mathcal{Y}} \exp(\theta \cdot \Phi(x,y'))} $ and $\lambda$ is the regularization constant.   
\\ 
{\bf Kernel importance weighting method (Kernel IW)} 
maximizes the conditional target data likelihood as estimated using importance
weighting 
with the density ratio, {\small$\max_{\theta} \mathbb{E}_{\Ps(x)P(y|x)} \left[  \frac{\Pt(x)}{\Ps(x)} \left(\log P_{\theta} (Y|X)  \right)\right] - \lambda \| \theta \|^2_2$}.\\
{\bf Linear robust bias aware prediction (Robust)} adversarially minimizes the target distribution logloss without utilizing kernelization , i.e. only first order features are used, trained using direct gradient calculations (\ref{eq:gradient}).\\
{\bf Linear logistic regression (LR)} utilizes only first order features in the source conditional log likelihood maximization.\\
{\bf Linear importance weighting method (IW)} uses first order features only to maximize reweighted source likelihood.

\subsection{Model Selection}
For each kernelized method, 
we employ a polynomial kernel with order 2.
We choose regularization parameter $\lambda$ 
by 5-fold cross validation, or importance weighted cross validation (IWCV) from 
$\lambda \in [2^{-16}, 2^{-12}, 2^{-8}, 2^{-4}, 1]$.
We apply traditional cross validation on Kernel LR and LR, and apply IWCV on both importance weighting methods and robust methods.
Note that the traditional cross validation process is not correct anymore in the covariate shift setting, because under the covariate shift assumption, the source marginal data distribution of $P(x)$ is
different from the target distribution  \cite{sugiyama2007covariate}. 
Though IWCV was originally designed for the importance weighting methods, 
it is proven to be unbiased for any loss function.  We apply it to perform model tuning for our robust methods, even though the error estimate variance could 
be large.


\subsection{Logistic regression as density estimation}
We use a discriminative density estimation method that leverages the logistic regression classifier 
for estimating the density ratios. 
According to Bayes rule:
$
\frac{\Ps(x)}{\Pt(x)} = \frac{P(x | \text{``source"})}{P(x | \text{``target"})}
= \frac{P(\text{``source"}|x)}{P(\text{``target"}|x)} \frac{P(\text{``target"})}{P(\text{``source"})}, 
$
where the second ratio $P(\text{``target"}) / P(\text{``source"})$ is computed as the ratio of the number of target and source examples, and the first one is obtained by training a classifier with source data labeled as one class and target data as another class. Similar ideas also appears in recent literature \cite{lopez2016revisiting}.
The resulting density ratio of this method is also closely controlled by the amount of 
regularization. We also choose the regularization weight by cross validation. 

\subsection{Performance Evaluation}
We compare average logloss, 
$\mathbb{E}_{\tilde{P}_{\text{trg}}(x) \tilde{P}(y|x)}[-\log_2 \hat{P}(Y|X)]$,
for each method in Table~\ref{tab:result_average_1}.
We perform a paired t-test among each pair of methods.
We indicate the methods that have the best performance in bold, along with
methods that are statistically indistinguishable from the best (paired
t-test with $0.05$ significance level).
As shown from the table, the average logloss of the Kernel Robust method is significantly better or not significantly worse than all of the alternatives in all of the datasets. Moreover, we observe the following:

First, logloss of Kernel Robust and Robust is bounded by the uniform distribution baselines, while LR and IW methods can be arbitrary worse when the bias is large, like in \verb|Vehicle|. This aligns with the properties of robust methods because when the bias is large, the density ratio becomes small and results in uniform predictions. This indicates that robust methods should be preferred if robustness or safety is a concern when the amount of covariate shift is large.

Secondly, Kernel Robust consistently improves the performance from Robust while kernelization may harm LR and IW methods, like in \verb|Sat|. The reason is when the implicit assumption that (reweighted) source features can be generalize to target distribution in LR and IW does not hold anymore, incorporating larger dimensions of features could make predictions worse. For Kernel Robust and Robust, even though overfitting could still be a concern, the density ratio could adjust the certainty of the prediction and function like a regularizer based on the data's density in training and testing distribution, so that they suffer less from overfitting.

Finally, we find that Kernel Robust improvement over Robust is related to how far the source input distributions is from the target input distribution. The natural bias in \verb|Abalone| comes from one feature variable and could be smaller than the bias in synthetic data. This could be why the improvement of logloss in Abalone is smaller than other datasets.

\section{Conclusion}
Providing meaningful and robust predictions
under covariate shift is challenging.
Kernel methods 
are one avenue for considering large or infinite feature spaces without 
incurring a proportionate computational burden.  
We investigated the underlying theoretical foundations for applying kernel methods to RBA by extending
the generalized representer theorem, which makes it possible to represent the minimizer of the regularized 
expected loss with reweighted kernel expectations under the source distribution, and therefore minimize
the objective using gradient calculations that only depend on source samples. In addition, we presented the implication of kernel RBA in providing more restrictive feature matching constraints 
and tighter entropy bounds for target loss, and demonstrated that 
kernel RBA is both consistent w.r.t its own expected target loss and 0-1 loss. 
We experimentally validated the advantages of kernelized
RBA with synthetically subsampled benchmark data and naturally biased data.  


\bibliographystyle{aaai}
\bibliography{biblio}

\newpage
\appendix
\onecolumn
\section{SUPPLEMENTARY MATERIALS}
\subsection{Dataset Details}\label{sup:details}
We show the more detailed information about the datasets we used in the experiment in the following tables.
We expect the method to also work for higher dimensional dataset when equipped with accurate density ratio estimation. Since the development and analysis of this paper focus more on the Kernel RBA method itself and not on density estimation, we believe smaller datasets are more suitable for the evaluation. We leave the problem of being robust to possibly inaccurate density ratios in higher dimension to future work.
\begin{table}[h]
\begin{center}
\caption{
 Biased Datasets } \label{tab:datasets_info_1}
{\small
\hspace*{-0.2cm}\begin{tabular}{|c|c|c|c|c|c|}
\hline 
{\bf Dataset} & {\bf Features}  & {\bf Examples} 
& {\bf Classes} \\
\hline
\verb|Vehicle| & 18 & 846 & 4\\
\hline
\verb|Segment| & 19 & 2310 & 7\\
\hline
\verb|Sat| & 36 & 6435 & 7 \\
\hline
\verb|Abalone| & 7 & 4177& 3\\
\hline
\verb|MNIST-3v8| & 784 & 5885 & 2\\
\hline
\verb|MNIST-7v9| & 784 & 5959 & 2\\
\hline
\end{tabular}
}
\end{center}
\end{table}

\subsection{Accuracy analysis} \label{sup:accuracy}
We
investigate 
the accuracy (the complement of the misclassification error)
of the predictions provided by each of the six approaches on both
synthetically biased datasets and naturally biased datasets (in Table~\ref{tab:result_accuracy_1}), 
where the significant best performance in paired t-test are demonstrated in bold numbers. The significance level here is 0.05. 
Despite the discrepancy between the logarithmic loss and the
misclassification error, the
Kernel Robust approach provides statistically better performance than other alternative methods, except on the \verb|Abalone| dataset.
The logarithmic loss is an upper bound of the 0-1 loss. 
However, 
the bound can be somewhat loose, so a lower log loss does not necessarily 
indicate 
a smaller classification 
error rate. 
This is a natural outcome of using logarithmic loss for convenience of optimization. 
Since logloss is the natural loss measure for probabilistic prediction and is being optimized by all methods (and not accuracy), we validate our method by comparing to other methods using it.  
Accuracy and logloss do not correlated perfectly, so it is unsurprising that this small difference exists on a measure not being directly optimized. 

{
\begin{table*}[htp]
\begin{center}
\caption{Average Accuracy Comparison} \label{tab:result_accuracy_1}
{\small
\begin{tabular}{|K{1.8cm}|c|K{1.8cm}|K{1.8cm}||K{1.8cm} |K{1.8cm} | K{1.8cm}|}
\hline 
{\bf Dataset} & {\bf Kernel Robust}  & {\bf Kernel LR} 
& {\bf Kernel IW}&{\bf Robust}&{\bf LR} &{\bf IW}\\
\hline
\verb|Vehicle| & {\bf 38\%}& 37\%& 33\% & 36\% & 36\% & 28\%\\
\hline
\verb|Segment| & {\bf 71\%}& 70\%& 37\% & 67\% & 68\% & 36\%\\
\hline
\verb|Sat| & {\bf 33\%}& 30\% & 28\% & 10\% & 10\% & 16\%\\
\hhline{|=|=|=|=#=|=|=|}
\verb|Abalone|& 46\% & 43\%& 42\%& {\bf 48\%} & 47\% & 39\%\\
\hline
\verb|MNIST-3v8|& {\bf 88\%} & 86\%& 86\%& 87\% & 75\% & 85\%\\
\hline
\verb|MNIST-7v9|& {\bf 87\%} & 85\%& 86\%& 86\% & 71\% & 83\%\\
\hline
\end{tabular}
}
\end{center}
\end{table*}
}

\end{document}